\newtheorem{theorem}{Theorem}[section]
\newtheorem{proposition}{Proposition}
\newtheorem{definition}{Definition}
\newtheorem{lemma}{Lemma}
\newtheorem{problem}{Problem}
\def\BibTeX{{\rm B\kern-.05em{\sc i\kern-.025em b}\kern-.08em
    T\kern-.1667em\lower.7ex\hbox{E}\kern-.125emX}}
\begin{document}

\title{Hybrid Safety Verification of Multi-Agent Systems using $\psi$-Weighted CBFs and PAC Guarantees\\
}

\author{Venkat Margapuri, Garik Kazanjian, Naren Kosaraju
     \thanks{The authors are with the Department of  Computing Sciences at Villanova University, PA, USA. Emails: \{vmargapu, gkazanji, nkosaraj\}@villanova.edu} \\
}

\maketitle

\begin{abstract}
This study proposes a hybrid safety verification framework for closed-loop multi-agent systems under bounded stochastic disturbances. The proposed approach augments control barrier functions with a novel $\psi$-weighted formulation that encodes directional control alignment between agents into the safety constraints. Deterministic admissibility is combined with empirical validation via Monte Carlo rollouts, and a PAC-style guarantee is derived based on margin-aware safety violations to provide a probabilistic safety certificate. The results from the experiments conducted under different bounded stochastic disturbances validate the feasibility of the proposed approach. 
\end{abstract}

\begin{IEEEkeywords}
Control Barrier Functions, PAC-style guarantees, Probabilistic Safety, Hybrid Safety Verification, Margin-Aware Safety, Multi-Agent Systems 
\end{IEEEkeywords}

\section{Introduction}
Safety within multi-agent systems is essential for real-world applications such as autonomous driving \cite{fisher2021towards, araujo2023testing} and robotic swarm deployments in agriculture \cite{abdelkader2021aerial, abhang2024swarm}, manufacturing \cite{bi2021safety, yang2024plug}, and search and rescue operations \cite{drew2021multi}, where agents must navigate safely through their environment. Safety in a stochastic multi-agent dynamical system requires that all agent trajectories remain within a predefined safe set under specified control inputs and time horizons. 
Traditional approaches include reachability-based formulations \cite{borquez2024safety, bansal2017hamilton}, where a Hamilton-Jacobi partial differential equation is solved to characterize the backward-reachable set that avoids unsafe regions. However, these methods are computationally expensive and scale poorly to high-dimensional or multi-agent systems. More recently, barrier certificates \cite{10.1007/978-3-540-24743-2_32, 4287147} and control barrier functions (CBFs) \cite{7782377, ames2019control} have emerged as tractable alternatives for certifying safety. By enforcing forward invariance of a safe set via control-affine constraints, CBFs offer real-time safety guarantees under deterministic assumptions. Yet, such guarantees may fail in the presence of noise or unmodeled disturbances. While stochastic CBF variants address this, they often rely on strong distributional assumptions or chance-constrained formulations.

To bridge the gap, this work introduces a hybrid safety verification framework that unifies $\psi$-weighted CBFs for forward invariance with finite-sample probably approximately correct (PAC)-style guarantees for margin-aware safety under bounded stochastic disturbances, where $\psi$ is a term inspired by quantum walk dynamics \cite{qiang2024quantum} to promote pairwise safety among different agents. Rather than assuming complete knowledge of the noise distributions, the proposed method combines deterministic admissibility with empirical validation via Monte Carlo rollouts under bounded stochasticity, yielding a distribution-free safety certificate with high-probability guarantees. The proposed method is feasible in multi-agent applications where uncertainty is prevalent and exact noise modeling is infeasible. 

\section{Related Work}
CBFs are a class of Lyapunov-like \cite{chen2021learning} functions that enable formal safety guarantees by ensuring the forward invariance of a predefined safe set. For control-affine systems, a CBF defines a constraint on the control input that, when enforced through a real-time quadratic program (QP), ensures the system state remains within a safe set in perpetuity or a time horizon. Multiple studies \cite{clark2021verification, alan2023control, choi2021robust, 10383473} have adopted the use of CBFs for safety certification in control-affine systems across various domains. While classical CBF formulations operate under deterministic assumptions, they have been extended to higher-degree \cite{breeden2021high, wang2021learning, xiao2021high}, stochastic \cite{wang2021safety, singletary2022safe}, and hybrid \cite{maghenem2021adaptive, yang2024safe} systems, demonstrating the feasibility of CBFs for different classes of systems. In the realm of stochastic multi-agent settings, multiple studies \cite{song2022safety, 4287147} have explored enforcing safety by considering worst-case bounds on uncertainties. Although worst-case bounds offer robust guarantees, they can be overly conservative, keeping the system far from the safe set boundary. An alternative paradigm comes from providing probabilistic safety guarantees on constraints, where the safety condition is said to hold with a probability exceeding a certain threshold. Probabilistic safety guarantees offer a balanced approach by quantifying risk and allow for better trade-offs between safety and efficiency under stochasticity.

PAC guarantees \cite{gonzales2025multi, alquier2021user} and Scenario optimization techniques \cite{doi:10.1137/07069821X, 10383473} enable data-driven generalization from sampled realizations and provide a feasible framework to derive probabilistic guarantees for control-affine systems. Recent works \cite{gonzales2025multi, majumdar2021pac, 10383473, akella2022barrier} have proposed probabilistic safety guarantees based on PAC and scenario optimization approaches, avoiding assumptions about the underlying uncertainty distribution. Building on all these prior works, this study integrates $\psi$-weighted CBF-based control with PAC-style guarantees, enabling formal, distribution-free probabilistic guarantees for closed-loop multi-agent systems without requiring worst-case conservatism or strong assumptions about noise distributions.

\section{Preliminaries}

$\mathbb{N}, \mathbb{R}, \mathbb{R}_{\geq 0}$ denote the set of natural numbers, real numbers, and non-negative real numbers, respectively. The Euclidean norm denotes the magnitude of a vector, defined for any vector $v$ as $\|v\|_2 := \sqrt{v^\top v}$. The supremum of a function $f : \mathcal{D} \to \mathbb{R}$ over a domain $\mathcal{D}$ is denoted by $\sup_{x \in \mathcal{D}} f(x)$ and defined as the least upper bound of the image of $f$ over $\mathcal{D}$: \(
\sup_{x \in \mathcal{D}} f(x) := \inf \left\{ b \in \mathbb{R} \mid f(x) \leq b \ \forall x \in \mathcal{D} \right\}
\). A continuous function $\alpha: \mathbb{R} \to \mathbb{R}$ is said to be an extended class-$\mathcal{K}$ function if it is strictly increasing and $\alpha(0) = 0$. The standard notation $\mathbb{P}[\cdot]$ is used for probability measures, $exp(\cdot)$ for exponential functions, $\mathbb{E}[\cdot]$ for the expectation (average value) of a random variable over its probability distribution, and $\mathbb{I}[\cdot]$ to denote the indicator function (1 if true, 0 otherwise).

\section{Closed-Loop Multi-Agent Systems with Safety Constraints}

This section introduces the design of the closed-loop multi-agent systems considered for safety verification by this study. A closed-loop multi-agent system with $N$ agents is defined as tuple $\mathcal{S} = (\mathcal{X}, \mathcal{U}, \mathcal{W}, \mathcal{F}, \mathcal{G})$ where $\mathcal{X} = \{x_i \in \mathbb{R}^n \}_{i=1}^N$ is the state space, $\mathcal{U} = \{u_i \in \mathbb{R}^m \}_{i=1}^N$ is the control input space, and $\mathcal{W} = \{w_i(t) \in \mathbb{R}^n\}_{i=1}^N$ is the stochastic disturbance space modeled as a bounded process noise satisfying $\|w_i(t)\|_2 \leq \bar{w}$ for some constant $\bar{w} > 0$. $\mathcal{F} = \{f_i: \mathbb{R}^n \to \mathbb{R}^n\}_{i=1}^N$ and $\mathcal{G} = \{g_i: \mathbb{R}^n \to \mathbb{R}^{n \times m}\}_{i=1}^N$ are locally Lipschitz continuous, defining the non-linear control-affine dynamics. Each agent in the system evolves according to the dynamics: 
\begin{equation}
\label{eq:dynamics}
\dot{x}_i = f(x_i) + g(x_i) \cdot u_i + w_i(t)
\end{equation}

The agents are required to satisfy pairwise safety constraints specified through a continuously differentiable function $h_{ij} : \mathbb{R}^{Nn} \to \mathbb{R}$ that encodes a minimum separation or collision avoidance criterion between agents $i$ and $j$. 

\begin{definition}[$\psi$-Weighted Pairwise Safety Function]
\label{pairwise_safety_function}
For agents $i$ and $j$, the pairwise safety function incorporating amplitude-modulated influence is defined as:
\begin{equation}
\label{eq:safety_function}
\tilde{h}_{ij}(x,u) \triangleq h_{ij}(x) + \psi \cdot \mathcal{A}_{ij}^\top (u_i - u_j),
\end{equation}
where $h_{ij}(x): \mathbb{R}^{Nn} \to \mathbb{R}$ encodes a minimum separation constraint, and $\mathcal{A}_{ij}$ is a quantum-inspired propagation vector given by:
\begin{equation}
\label{eq:Aij_reg}
\mathcal{A}_{ij}(x)
\;\triangleq\;
\frac{x_i - x_j}{\sqrt{\|x_i - x_j\|_2^2 + \varepsilon^2}}
\,\exp\!\big(-\|x_i - x_j\|_2^2\big).
\end{equation}
representing a directionally coherent yet distance-sensitive modulation of control coupling. The scalar $\psi \geq 0$ adjusts the overall amplitude of influence, and $\epsilon > 0$ is a tiny constant to avoid division by zero.
\end{definition}

This formulation draws inspiration from quantum walk dynamics \cite{qiang2024quantum}, where amplitudes attenuate over spatial separation while preserving directional coherence. In this context, $\mathcal{A}_{ij}$ acts analogously to a coherence-weighted interaction term, suppressing long-range interference and enhancing influence between nearby agents. The operational flow of $\mathcal{S}$ with key functional components is illustrated in Figure \ref{fig:closed_loop_states}.

\begin{figure}[ht]
  \centering
  \includegraphics[width=0.7\columnwidth]{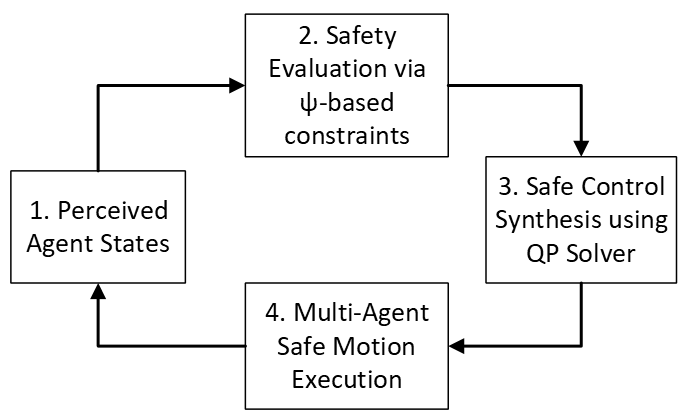}  % Adjust filename and path
  \caption{Functional states of the closed-loop multi-agent system $\mathcal{S}$}
  \label{fig:closed_loop_states}
\end{figure}

\begin{problem}[Stochastic Multi-Agent Safety Verification]
\label{prob:stochastic_safety}
For the closed-loop multi-agent system $\mathcal{S}$ where each agent pair $(i, j)$ is associated with a continuously differentiable $\psi$-weighted pairwise safety function expressed in Definition \eqref{pairwise_safety_function}, design a feedback policy $u_i = \pi_i(x)$ for each agent such that the following holds with a high probability: 
\[
\mathbb{P} \left[ \tilde{h}_{ij}(x(t), u(t)) \geq 0, \ \forall t \in [0, T], \ \forall i \neq j \right] \geq 1 - \delta,
\]
where $\delta \in (0,1)$ is a user-defined risk tolerance parameter.

\end{problem}

\section{Solution Approach}

The proposed hybrid verification approach consists of the following steps:
\begin{enumerate}
    \item Derive the formal deterministic safety guarantees using CBFs and the admissible control set, ensuring forward invariance of the safe set in the absence of noise.
    \item Extend step 1 to bounded stochastic settings by performing Monte Carlo rollouts with sampled disturbances, and estimate margin-aware safety violations based on observed behavior.
    \item Finally, derive a probabilistic upper bound on the true violation probability using PAC-style generalization, wherein the bound holds with a high probability over unseen realizations.
\end{enumerate}

\subsection{Safety via $\psi$-Weighted CBFs}

The safety of closed-loop multi-agent systems using CBFs is closely related to the notion of forward invariance, which is formally analyzed using CBFs extended with the $\psi$-weighted alignment term. 

A safe set characterizes the subset of the state space in which an agent can operate without violating safety constraints.
\begin{definition}[Safe Set]
For each agent $i$, the safe set is defined as:
\begin{equation}
\mathcal{C}_i \triangleq \{x_i \in \mathbb{R}^n \mid h_i(x_i) \geq 0\},
\end{equation}
where $h_i: \mathbb{R}^n \to \mathbb{R}$ is a continuously differentiable function encoding the safety condition.
\end{definition}

\begin{definition}[Forward Invariance of a Safe Set]
A safe set $\mathcal{C}_i$ is forward invariant with respect to the closed-loop dynamics of agent $i$: \(
\dot{x}_i = f(x_i) + g(x_i) u_i + w_i(t),
\)
if for every initial condition $x_i(0) \in \mathcal{C}_i$, the system state satisfies $x_i(t) \in \mathcal{C}_i, \text{ for all time }t \geq 0$.
\end{definition}

CBFs are employed to enforce forward invariance, and a valid CBF for the safe set $\mathcal{C}_i$ is defined as:

\begin{definition}[Valid CBF]
A continuously differentiable function $h_i : \mathbb{R}^n \to \mathbb{R}$ is a valid CBF for the safe set $\mathcal{C}_i$ if there exists an extended class-$\mathcal{K}$ function $\alpha : \mathbb{R} \to \mathbb{R}$ such that:
\begin{equation}
\label{eq:CBF}
\dot{h}_i(x_i,u_i) + \alpha(h_i(x_i)) \geq 0,
\end{equation}
for all admissible state-control pairs $(x_i,u_i)$.
\end{definition}

\begin{lemma}[Single-Agent Safety via CBF]
\label{lem:single_agent_safety}
Given a valid CBF $h_i$ for agent $i$ and corresponding safe set $\mathcal{C}_i$, any control input $u_i$ satisfying: \(
\dot{h}_i(x_i,u_i) + \alpha(h_i(x_i)) \geq 0,
\)
for all $x_i \in \mathcal{C}_i$, ensures the forward invariance of the set $\mathcal{C}_i$.
\end{lemma}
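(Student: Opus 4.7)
The plan is to prove forward invariance of $\mathcal{C}_i$ by showing that the scalar signal $\eta(t) \triangleq h_i(x_i(t))$ cannot cross zero from above whenever the CBF inequality holds, via a standard comparison argument against the scalar ODE $\dot{y} = -\alpha(y)$.

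First, I would fix an arbitrary initial condition $x_i(0) \in \mathcal{C}_i$, so that $\eta(0) = h_i(x_i(0)) \geq 0$. Because $f_i$ and $g_i$ are locally Lipschitz and the admissible feedback $u_i$ renders the closed-loop vector field locally Lipschitz in $x_i$, a unique absolutely continuous trajectory $x_i(\cdot)$ exists on a maximal interval. Along it, $\eta$ inherits absolute continuity and, by the standing hypothesis $\dot{h}_i(x_i,u_i) + \alpha(h_i(x_i)) \geq 0$, satisfies the scalar differential inequality $\dot{\eta}(t) \geq -\alpha(\eta(t))$ for almost every $t$ in that interval.

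Next, I would invoke the comparison lemma. Let $y(t)$ denote the solution of $\dot{y} = -\alpha(y)$ with $y(0) = \eta(0)$. Since $\alpha$ is an extended class-$\mathcal{K}$ function with $\alpha(0) = 0$, the origin is an equilibrium of this scalar ODE, so any trajectory starting at $y(0) \geq 0$ remains non-negative for all $t \geq 0$. The comparison lemma then yields $\eta(t) \geq y(t) \geq 0$, which gives $h_i(x_i(t)) \geq 0$ and hence $x_i(t) \in \mathcal{C}_i$ throughout the interval of existence. A standard forward-completeness argument, using boundedness of trajectories kept inside $\mathcal{C}_i$ together with local Lipschitz continuity, then extends the invariance to all $t \geq 0$.

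The main obstacle, or rather the one genuinely subtle point, is the boundary case $\eta(t_0) = 0$: there the CBF inequality only supplies $\dot{\eta}(t_0) \geq 0$ rather than a strict inward push, and one must verify that this non-strict condition suffices to keep $\eta$ non-negative. This is exactly what the comparison framework handles, provided $\alpha$ is continuous so that the scalar ODE is well posed near zero; otherwise a Nagumo-type tangent-cone argument would be needed as a substitute. I would also note that the lemma is the \emph{deterministic} baseline (effectively treating $w_i \equiv 0$ within the admissible state-control pairs), and its extension to the bounded stochastic dynamics is deferred to the Monte Carlo rollouts and PAC-style bounds developed in the subsequent subsections.
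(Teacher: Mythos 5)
Your proof is correct and follows essentially the same route as the paper, which simply defers to the Comparison Principle (Lemma 3.4 of Khalil) that you spell out in full: bounding $h_i(x_i(t))$ below by the solution of $\dot{y}=-\alpha(y)$ with $y(0)=h_i(x_i(0))\geq 0$ and using that the origin is an equilibrium of that scalar ODE. Your additional remarks on existence, the boundary case, and forward completeness are sound elaborations of the same argument rather than a different approach.
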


\textit{Proof:} This is well-established and we refer to Comparison Principle (Lemma 3.4 from \cite{khalil_nonlinear_2002}) for the proof. 

For multi-agent scenarios, the single-agent CBF (Equation \eqref{eq:CBF}) is extended to the pairwise setting using the $\psi$-weighted safety function defined in Equation \eqref{eq:safety_function}, and a QP is solved to satisfy the condition in Lemma \ref{lem:single_agent_safety}. Next, the admissible control set to guarantee safety across multi-agent interactions is defined.

\begin{definition}[$\psi$-Weighted Admissible Control Set]
\label{def:psi_weighted_admissible_control_set}
The admissible control set $\bar{\mathcal{U}}(x)$ is defined as:
\begin{equation}
\bar{\mathcal{U}}(x) = \left\{u \in \mathbb{R}^{Nm} \,\middle|\, \dot{\tilde{h}}_{ij}(x,u) + \alpha(\tilde{h}_{ij}(x,u)) \geq 0, \forall i \neq j\right\},
\end{equation}
where $\dot{\tilde{h}}_{ij}(x, u)$ denotes the time derivative of the $\psi$-weighted safety function, computed along the system dynamics, and $\alpha(\cdot)$ is the extended class-$\mathcal{K}$ function. 
\end{definition}

This admissible set comprises the collection of all joint control inputs that ensure the forward invariance of every pairwise safe set, thus preserving safety of the full system under both spatial and directional interactions.

\begin{proposition}[Admissible Control Set Non-Empty Condition]
\label{prop:admissibile_control_set}
If at state $x \in \mathbb{R}^{Nn}$, there exists a control input $u$ satisfying: \(
\dot{\tilde{h}}_{ij}(x,u) + \alpha(\tilde{h}_{ij}(x,u)) \geq 0,
\)
for all pairs $(i,j)$, then the admissible control set $\bar{\mathcal{U}}(x)$ is non-empty.
\end{proposition}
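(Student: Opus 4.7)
The plan is essentially to unpack the definition of $\bar{\mathcal{U}}(x)$ and observe that the hypothesis of the proposition is exactly the defining membership condition of that set. First I would recall Definition \ref{def:psi_weighted_admissible_control_set}, which states that $\bar{\mathcal{U}}(x)$ consists of all $u \in \mathbb{R}^{Nm}$ satisfying $\dot{\tilde h}_{ij}(x,u) + \alpha(\tilde h_{ij}(x,u)) \geq 0$ for every pair $i \neq j$. Then I would take the $u$ furnished by the hypothesis, note that it satisfies precisely this family of inequalities, and conclude $u \in \bar{\mathcal{U}}(x)$, so the set is non-empty.

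Because the proposition is almost definitional, there is no real analytical obstacle; the only substantive step is recognizing that the time derivative $\dot{\tilde h}_{ij}(x,u)$ in the admissible set definition is the same object that appears in the hypothesis, i.e.\ the Lie derivative of $\tilde h_{ij}$ along the closed-loop dynamics \eqref{eq:dynamics} with the noise term either set to zero (for the deterministic layer treated in this subsection) or absorbed into an appropriate nominal model. I would spend one sentence making that identification explicit so that there is no ambiguity between $\dot{\tilde h}_{ij}$ as used in the hypothesis and as used in the definition.

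The one place where care is warranted, and which I would flag but not prove here, is the control-affineness of the constraint in $u$: since $\tilde h_{ij}(x,u) = h_{ij}(x) + \psi \mathcal{A}_{ij}^\top(u_i - u_j)$ is affine in $u$, the inequality $\dot{\tilde h}_{ij} + \alpha(\tilde h_{ij}) \geq 0$ is likewise affine in $u$, so $\bar{\mathcal{U}}(x)$ is in fact a (possibly empty) polyhedron. The proposition only claims non-emptiness under the existence hypothesis, so the proof terminates at the one-line set-membership argument above; I would not attempt the stronger statement that such a $u$ always exists, which would require additional regularity assumptions on $h_{ij}$, $\psi$, and the class-$\mathcal{K}$ function $\alpha$.
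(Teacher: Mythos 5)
Your proof is correct and coincides with the paper's (implicit) treatment: the paper offers no proof at all because, as you observe, the hypothesis is verbatim the membership condition of Definition~\ref{def:psi_weighted_admissible_control_set}, so the furnished $u$ witnesses non-emptiness. One minor caveat on your side remark (which you rightly set aside as immaterial): the constraint $\dot{\tilde{h}}_{ij}(x,u) + \alpha(\tilde{h}_{ij}(x,u)) \geq 0$ is generally \emph{not} affine in $u$, since $\alpha$ is only assumed to be an extended class-$\mathcal{K}$ function and its composition with the affine map $u \mapsto \tilde{h}_{ij}(x,u)$ need not be affine, so $\bar{\mathcal{U}}(x)$ need not be a polyhedron.
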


Based on Proposition \ref{prop:admissibile_control_set}, we can now assert the forward invariance of pairwise safety constraints.

\begin{lemma}[Deterministic Forward Invariance of Pairwise $\psi$-Weighted Safe Sets]
\label{lemma:forward_invariance}
Given a pair of distinct agents $(i, j)$ in the closed-loop multi-agent system $\mathcal{S}$ and $\psi$-weighted pairwise safety function $\tilde{h}_{ij}(x, u)$ defined in Equation~\eqref{eq:safety_function}, if the initial condition satisfies $\tilde{h}_{ij}(x(0), u(0)) \geq 0$, and the control input satisfies the admissibility condition $u(t) \in \bar{\mathcal{U}}(x(t))$ for all $t \geq 0$, then the set:
\[
\tilde{\mathcal{C}}_{ij} = \left\{ x \in \mathbb{R}^{Nn} \mid \tilde{h}_{ij}(x, u) \geq 0 \right\}
\]
is forward invariant under the nominal (noise-free) dynamics.
\end{lemma}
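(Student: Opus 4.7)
The plan is to reduce the pairwise $\psi$-weighted safety function $\tilde{h}_{ij}$ to the single-agent CBF setting already handled by Lemma~\ref{lem:single_agent_safety}, so that the Comparison Principle (Khalil, Lemma~3.4) delivers forward invariance on the joint state space $\mathbb{R}^{Nn}$. The structure mimics the classical Nagumo/CBF argument: admissibility of $u(t)$ produces a one-sided differential inequality on $\tilde{h}_{ij}$ along trajectories, which in turn sandwiches $\tilde{h}_{ij}$ between a non-negative scalar ODE solution and the safe boundary.

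First, I would fix a distinct pair $(i,j)$ and a trajectory $x(t)$ generated by the nominal (noise-free) dynamics $\dot{x}_k = f(x_k) + g(x_k)u_k$ for every $k$, with $u(t) \in \bar{\mathcal{U}}(x(t))$ for all $t \geq 0$. By Definition~\ref{def:psi_weighted_admissible_control_set}, admissibility gives the pointwise inequality
\[
\dot{\tilde{h}}_{ij}(x(t), u(t)) + \alpha\bigl(\tilde{h}_{ij}(x(t), u(t))\bigr) \geq 0, \quad \forall t \geq 0,
\]
equivalently $\dot{\eta}(t) \geq -\alpha(\eta(t))$ with $\eta(t) \triangleq \tilde{h}_{ij}(x(t), u(t))$. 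Letting $y(t)$ solve the scalar initial value problem $\dot{y} = -\alpha(y)$, $y(0) = \eta(0) \geq 0$, the Comparison Principle yields $\eta(t) \geq y(t)$ for all $t \geq 0$.

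Next, I would invoke the fact that $\alpha$ is an extended class-$\mathcal{K}$ function, so $\alpha(0)=0$ and $y\equiv 0$ is an equilibrium of $\dot{y}=-\alpha(y)$. Combined with strict monotonicity of $\alpha$ and the hypothesis $y(0)\geq 0$, this forces $y(t) \geq 0$ for all $t \geq 0$, hence $\eta(t) \geq 0$. Translating back, $x(t) \in \tilde{\mathcal{C}}_{ij}$ for all $t \geq 0$, which is exactly the claimed forward invariance.

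The principal subtlety — and the step I expect to require the most care — is the meaning of $\dot{\tilde{h}}_{ij}$ itself. Because $\tilde{h}_{ij}$ depends jointly on $x$ and $u$, its total derivative along a closed-loop trajectory contains a $\nabla_u \tilde{h}_{ij}\cdot \dot{u}$ contribution that is well-defined only when $u(t)$ is sufficiently regular (for instance, absolutely continuous, as is typical for QP-based feedback). I would therefore either assume such regularity of the admissible policy or, equivalently, view $u(t) = \pi(x(t))$ as a locally Lipschitz state-feedback map and absorb $\dot{u}$ via the chain rule into a purely state-dependent derivative. Either reading makes the differential inequality above rigorous along Carath\'eodory solutions and keeps the Comparison-Principle step intact. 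A secondary, minor point is noting that the $\varepsilon$-regularization in $\mathcal{A}_{ij}$ guarantees continuous differentiability of $\tilde{h}_{ij}$ even when $x_i = x_j$, so no boundary case for the chain rule has to be treated separately.
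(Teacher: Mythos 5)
Your proposal is correct and takes essentially the same route as the paper's proof: obtain the differential inequality $\dot{\tilde{h}}_{ij} + \alpha(\tilde{h}_{ij}) \geq 0$ from admissibility, compare against the scalar ODE $\dot{z} = -\alpha(z)$ with $z(0) = \tilde{h}_{ij}(x(0),u(0)) \geq 0$ via Khalil's Comparison Principle, and conclude non-negativity. Your additional care about the $\nabla_u \tilde{h}_{ij}\cdot\dot{u}$ term and about why the comparison solution stays non-negative is a welcome refinement of points the paper simply asserts, but it does not change the argument.
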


\begin{proof}
Let $x_i(t), x_j(t) \in \mathbb{R}^n$ denote the states of agents $i$ and $j$ respectively, evolving according to the control-affine dynamics in Equation \eqref{eq:dynamics}, but under noise-free conditions. Therefore, We get:
\[
\dot{x}_i = f(x_i) + g(x_i) \cdot u_i, \quad \dot{x}_j = f(x_j) + g(x_j) \cdot u_j.
\]

Differentiating $\tilde{h}_{ij}$ along the noise-free system trajectories yields:
\[
\dot{\tilde{h}}_{ij}(x,u) = \dot{h}_{ij}(x) + \psi \cdot \frac{d}{dt} \left[ \mathcal{A}_{ij}^\top (u_i - u_j) \right].
\]
The term $\dot{h}_{ij}(x)$ follows from the chain rule as:
\(
\dot{h}_{ij}(x) = \nabla h_{ij}(x)^\top \left( \dot{x}_i - \dot{x}_j \right),
\)
and substituting the dynamics gives:
\begin{align}
\dot{h}_{ij}(x) = \nabla h_{ij}(x)^\top \Big[\, &(f(x_i) + g(x_i) \cdot u_i) - (f(x_j)  + g(x_j) \cdot u_j) \nonumber \,\Big] \nonumber.
\end{align}

Now, by the admissibility condition, the control input satisfies:
\(
\dot{\tilde{h}}_{ij}(x(t), u(t)) + \alpha(\tilde{h}_{ij}(x(t), u(t))) \geq 0, \forall t \geq 0,
\)
for some extended class-$\mathcal{K}$ function \( \alpha(\cdot) \).

Let \( z(t) \) be a scalar function that evolves according to the differential equation:
\[
\dot{z}(t) = -\alpha(z(t)), \quad z(0) = \tilde{h}_{ij}(x(0), u(0)) \geq 0.
\]

Then, by the Comparison Principle (see Lemma 3.4 in \cite{khalil_nonlinear_2002}), the solution \( \tilde{h}_{ij}(x(t), u(t)) \) satisfies:
\[
\tilde{h}_{ij}(x(t), u(t)) \geq z(t), \forall t \geq 0.
\]

Since \( z(t) \geq 0 \) for all \( t \geq 0 \), it follows that:
\[
\tilde{h}_{ij}(x(t), u(t)) \geq 0, \forall t \geq 0.
\]
Therefore, the system trajectory remains within the safe set \( \tilde{\mathcal{C}}_{ij} \), establishing its forward invariance.
\end{proof}

Lemma \ref{lemma:forward_invariance} establishes the forward invariance of the safe set associated with a single agent pair under admissible control. To extend this result to the entire multi-agent system, we now consider the joint admissibility condition across all interacting pairs.

\begin{theorem}[Forward Invariance and Safety Guarantee]
Consider the multi-agent system $\mathcal{S}$, pairwise $\psi$-weighted safety function (Definition \eqref{pairwise_safety_function}), and initial conditions $x(0)$ satisfying $\tilde{h}_{ij}(x(0),u(0)) \geq 0$ for all pairs $(i,j)$. If admissible control inputs $u(t) \in \bar{\mathcal{U}}(x(t))$ are maintained, the system guarantees:
\[
\tilde{h}_{ij}(x(t),u(t)) \geq 0, \quad \forall i \neq j, \forall t \geq 0.
\]
\end{theorem}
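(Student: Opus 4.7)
The plan is to reduce the multi-pair statement to a conjunction of single-pair statements, each of which has already been handled by Lemma \ref{lemma:forward_invariance}. The key observation is that the joint admissibility condition $u(t) \in \bar{\mathcal{U}}(x(t))$ from Definition \ref{def:psi_weighted_admissible_control_set} is defined precisely as the simultaneous satisfaction of the per-pair barrier inequality $\dot{\tilde{h}}_{ij}(x,u) + \alpha(\tilde{h}_{ij}(x,u)) \geq 0$ for every $i \neq j$. Thus, for any fixed pair $(i,j)$, the hypotheses of the pairwise lemma automatically hold along the entire trajectory.

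First, I would fix an arbitrary pair $(i,j)$ with $i \neq j$. By assumption, $\tilde{h}_{ij}(x(0), u(0)) \geq 0$, so the initial-condition hypothesis of Lemma \ref{lemma:forward_invariance} is met for this pair. Moreover, since $u(t) \in \bar{\mathcal{U}}(x(t))$ for all $t \geq 0$, unpacking the definition of the admissible set gives $\dot{\tilde{h}}_{ij}(x(t), u(t)) + \alpha(\tilde{h}_{ij}(x(t), u(t))) \geq 0$ for this particular pair at every time $t \geq 0$, which is exactly the admissibility hypothesis required by the lemma. Invoking Lemma \ref{lemma:forward_invariance} on the pair $(i,j)$ therefore yields $\tilde{h}_{ij}(x(t), u(t)) \geq 0$ for all $t \geq 0$, i.e., the trajectory remains in the pairwise safe set $\tilde{\mathcal{C}}_{ij}$.

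Next, I would observe that the argument above makes no use of the specific identity of the chosen pair, so it applies uniformly to every $(i,j)$ with $i \neq j$. Taking the conjunction of these per-pair conclusions yields the joint invariance claim, which can be summarized as
\[
x(t) \in \bigcap_{i \neq j} \tilde{\mathcal{C}}_{ij}, \qquad \forall\, t \geq 0,
\]
which is exactly the statement of the theorem.

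I do not anticipate a serious obstacle, since this theorem is essentially a bookkeeping step: the admissible control set $\bar{\mathcal{U}}(x)$ is engineered so that every per-pair CBF inequality is satisfied simultaneously, and the Comparison-Principle argument underlying Lemma \ref{lemma:forward_invariance} is completely decoupled across pairs in the noise-free setting. The only subtlety worth flagging is that the existence of a $u(t) \in \bar{\mathcal{U}}(x(t))$ along the closed-loop trajectory is presumed in the hypothesis rather than constructed here; this is where Proposition \ref{prop:admissibile_control_set} on non-emptiness of the admissible set is implicitly relied upon so that the hypothesis is meaningful.
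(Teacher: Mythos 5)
Your proposal is correct and follows essentially the same route as the paper, which simply applies Lemma~\ref{lemma:forward_invariance} simultaneously to all pairs; you merely spell out the unpacking of the admissible-set definition and the per-pair invocation in more detail. The remark about relying on Proposition~\ref{prop:admissibile_control_set} for the hypothesis to be non-vacuous is a reasonable additional observation but does not change the argument.
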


\textit{Proof:} This result follows directly from Lemma 2, applied simultaneously to all pairs.

While deterministic control barrier conditions ensure forward invariance in the absence of disturbances for all time, real-world systems are rarely noise-free. To certify safety under stochastic uncertainty, this study extends the deterministic admissibility to bounded stochasticity by quantifying the probability of safety violations using PAC-style guarantees.

\section{Stochastic Safety Guarantees with Pairwise $\psi$-Weighted Control}
This section introduces the formulation of probabilistic safety for the closed-loop multi-agent system $\mathcal{S}$, providing finite-time safety guarantees under stochastic disturbances.

We define the stochastic extension of the pairwise safe set:
\begin{definition}[Finite-Time Stochastic Safety Requirement]
\label{def:finite_time_stochasticity_requirement}
    Let $\tilde{h}_{ij}(x,u)$ be the $\psi$-weighted pairwise safety function. For a given risk threshold $\delta \in (0,1)$ and finite time horizon $T > 0$, the pairwise safe set is preserved stochastically if:
\[
\mathbb{P} \left[ \tilde{h}_{ij}(x(t), u(t)) \geq 0, \ \forall t \in [0,T] \right] \geq 1 - \delta.
\]
\end{definition}

The goal is to ensure that the multi-agent system $\mathcal{S}$ remains in the intersection of all pairwise sets over $[0, T]$ with a high-probability, as expressed by the following theorem.

\begin{theorem}[Stochastic Multi-Agent Safety Guarantee]
\label{thm:stochastic_safety}
Within the closed-loop multi-agent system $\mathcal{S}$ with $N$ agents, suppose for each agent pair $i, j$ $(i \neq j)$, the $\psi$-weighted pairwise safety function $~\tilde{h}_{ij}(x, u)$ satisfies the forward invariance condition: 
\begin{equation}
\label{eq:marginal_forward_invariance}
\dot{\tilde{h}}_{ij}(x(t), u(t)) + \alpha(\tilde{h}_{ij}(x(t), u(t))) \geq \gamma_{ij}(t), \quad \forall t \in [0, T],    
\end{equation}
where $\alpha$ is a class-$\mathcal{K}$ function, and $\gamma_{ij}(t)$ is a disturbance margin satisfying:
\begin{equation}
\label{eq:disturbance_margin}
\gamma_{ij}(t) \geq \sup_{\|w_i(t)\|,\|w_j(t)\| \leq \bar{w}} \left| \nabla h_{ij}(x(t))^{\top} (w_i(t) - w_j(t)) \right|.    
\end{equation}
Then the probability that any safety constraint is violated over $[0, T]$ is upper bounded by $\delta$, i.e.,
\[
\mathbb{P} \left[ \exists \, t \in [0,T],\ \exists \, i \neq j : \tilde{h}_{ij}(x(t), u(t)) < 0 \right] \leq \delta.
\]
\end{theorem}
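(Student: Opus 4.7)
The plan is to reduce the multi-agent claim to a per-pair pathwise invariance argument and then lift it via a union bound together with the PAC-style generalization inherent in the outer framework. First, for a fixed pair $(i,j)$ I would differentiate $\tilde{h}_{ij}$ along the full stochastic dynamics in \eqref{eq:dynamics}, isolating the noise-free derivative already produced in the proof of Lemma~\ref{lemma:forward_invariance} plus the extra term $\nabla h_{ij}(x)^{\top}(w_i(t)-w_j(t))$ arising from the additive disturbances. Since $\|w_i(t)\|_2,\|w_j(t)\|_2 \leq \bar{w}$ holds almost surely by the definition of $\mathcal{W}$, the magnitude of this extra term is dominated pointwise in time by the margin $\gamma_{ij}(t)$ supplied by \eqref{eq:disturbance_margin}.

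Combining this pointwise domination with the margin-compensated barrier inequality \eqref{eq:marginal_forward_invariance} yields $\dot{\tilde{h}}_{ij}(x(t),u(t)) + \alpha(\tilde{h}_{ij}(x(t),u(t))) \geq 0$ along the stochastic trajectory almost surely. I would then invoke the Comparison Principle with the auxiliary ODE $\dot{z} = -\alpha(z)$, $z(0) = \tilde{h}_{ij}(x(0),u(0)) \geq 0$, exactly as in Lemma~\ref{lemma:forward_invariance}, to conclude $\tilde{h}_{ij}(x(t),u(t)) \geq z(t) \geq 0$ for all $t \in [0,T]$ on each such sample path. To lift to the full multi-agent statement I would apply a union bound over the $N(N-1)$ ordered pairs, splitting the risk budget so that the aggregate probability of any pair violating safety is controlled by $\delta$.

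The hardest step will be making the $\delta$ on the right-hand side rigorous rather than vacuous, because under a strict deterministic reading of \eqref{eq:marginal_forward_invariance}--\eqref{eq:disturbance_margin} the per-pair argument above already gives violation probability zero. The intended interpretation, consistent with the Monte Carlo/PAC framing advertised in the solution approach, is that the supremum in \eqref{eq:disturbance_margin} is certified only from a finite batch of sampled disturbance realizations, so the residual $\delta$ is the PAC tail of a concentration bound (Hoeffding, or a scenario-theoretic bound) on the empirical margin-aware violation indicator $\mathbb{I}[\tilde{h}_{ij}(x(t),u(t))<0]$. I would therefore structure the final step so that $\delta$ inherits its meaning from this PAC generalization bound on the empirical margin rather than from the pointwise hypothesis alone, and combine it with the per-pair union bound to obtain the stated multi-agent probabilistic safety certificate.
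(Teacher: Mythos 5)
Your proposal is essentially correct but takes a genuinely different route from the paper, and your diagnosis of the near-vacuity of the statement is apt. The paper does \emph{not} run the pathwise comparison argument you describe. Instead it discretizes $[0,T]$ into $K$ steps, defines $Z_{ij}(t_k) := \tilde{h}_{ij}(x(t_k),u(t_k))$, argues from Lipschitz continuity and the noise bound that the increments $\Delta Z_{ij}^{(k)}$ are uniformly bounded by a constant $c$ with conditional variance at most $\sigma^2 = 4\bar{w}^2 \sup_x \|\nabla h_{ij}(x)\|^2 (\Delta t)^2$, centers them into a martingale difference sequence $Y_k$, and applies a one-sided Bernstein inequality with $\epsilon = h_{\min}$ (the minimum initial margin) to obtain an explicit per-pair tail
\[
\delta_{ij} = \exp\!\left(-\frac{h_{\min}^2}{2K\sigma^2 + \tfrac{2}{3} c\, h_{\min}}\right),
\]
after which a union bound over the $\binom{N}{2}$ pairs gives $\delta := \sum_{i<j}\delta_{ij}$. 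What the paper's route buys is a non-trivial, quantitative $\delta$ depending on $h_{\min}$, $\bar{w}$, $K$, and the gradient bound; what it costs is rigor, since the reduction of a violation event to $\sum_k Y_k \leq -h_{\min}$ tacitly assumes the conditional drift is nonnegative (which is exactly where the margin hypothesis re-enters), and the martingale structure sits awkwardly alongside hypotheses that are stated as sure, worst-case conditions. Your pathwise argument is cleaner and, under those same hypotheses, yields the stronger conclusion that the violation probability is zero --- which is precisely why you correctly observe that the stated $\delta$ is then vacuous. One caution: your proposed repair, reinterpreting $\delta$ as the tail of a PAC bound on an empirically certified margin, imports machinery the paper reserves for its separate empirical Bernstein theorem (Theorem~\ref{thm:bernstein_violation_bound}); it is not part of this theorem's hypotheses, so building the proof on it would prove a different statement. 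Also note that both you and the paper leave a small gap: the disturbance margin in \eqref{eq:disturbance_margin} covers only the noise entering through $\nabla h_{ij}$, not the noise that propagates into $\dot{\tilde{h}}_{ij}$ through the time derivative of the $\psi$-weighted term $\psi\,\mathcal{A}_{ij}(x)^{\top}(u_i-u_j)$, whose dependence on $x$ makes it noise-sensitive as well.
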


\begin{proof}
    Let $x_i(t) \in \mathbb{R}^n$ be the state of an agent $i$ from $\mathcal{S}$ that evolves according to Equation \eqref{eq:dynamics} with bounded stochastic disturbance $\|w_i(t)\| \leq \bar{w}$. Define $Z_{ij}(t) := \tilde{h}_{ij}(x(t), u(t))$. From the assumptions in Equations \eqref{eq:marginal_forward_invariance} and \eqref{eq:disturbance_margin}, 
    \begin{align}
    \dot{Z}_{ij}(t) + \alpha(Z_{ij}(t)) 
    &\geq \gamma_{ij}(t) \nonumber \\
    \geq\ &\sup_{\|w_i(t)\|,\, \|w_j(t)\| \leq \bar{w}} 
    \left| \nabla h_{ij}(x(t))^\top \big( w_i(t) - w_j(t) \big) \right|.
\end{align}

which implies that the margin of increase in $Z_{ij}(t)$ caused by the deterministic part (drift term) is at least as large as the negative contribution induced by the stochastic noise, offsetting the push toward violation. 

Next, let the time interval $[0, T]$ be discretized into $K$ uniform steps of size $\Delta t := T / K$. Define the discrete time points as $t_k := k \cdot \Delta t$ for $0 \leq k \leq K$. At each step, define $Z_{ij}(t_k) := \tilde{h}_{ij}(x(t_k), u(t_k))$. From the Lipschitz continuity of $f$, $g$, and $h_{ij}$, together with the bound $\|w_i(t)\| \leq \bar{w}$, there exists a constant $c > 0$ such that:
\begin{equation}
\label{eq:increment_bound}
|\Delta Z_{ij}^{(k)}| \leq c, \quad \forall k.
\end{equation}

Furthermore, the disturbance term in \eqref{eq:disturbance_margin} ensures that the conditional variance of $\Delta Z_{ij}^{(k)}$ given the past, is bounded as:
\begin{equation}
\label{eq:variance_bound}
\mathrm{Var}\!\left[ \Delta Z_{ij}^{(k)} \mid \mathcal{F}_k \right] \leq \sigma^2, \quad \forall k,
\end{equation}
where $\mathcal{F}_k$ is the natural filtration of the process encompassing all control inputs and noise realizations up to time $t_k$, and
\begin{equation}
\label{eq:variance_sigma}
\sigma^2 := 4 \bar{w}^2 \, \sup_{x \in \mathbb{R}^{Nn}} \| \nabla h_{ij}(x) \|^2 \, (\Delta t)^2.
\end{equation}

We now apply the Bernstein inequality for sums of bounded, zero-mean random variables with bounded variance.  
Define the centered increments:
\[
Y_k := \Delta Z_{ij}^{(k)} - \mathbb{E}[\Delta Z_{ij}^{(k)} \mid \mathcal{F}_k].
\]
By construction, $(Y_k)_{k=0}^{K-1}$ is a martingale difference sequence with $|Y_k| \leq c$ and $\mathrm{Var}[Y_k \mid \mathcal{F}_k] \leq \sigma^2$.

Bernstein's (one-sided) inequality then states that for any $\epsilon > 0$:
\begin{equation}
\label{eq:bernstein_bound}
\mathbb{P} \left[ \sum_{k=0}^{K-1} Y_k \leq -\epsilon \right] 
\leq \exp \left( - \frac{\epsilon^2}{2 K \sigma^2 + \frac{2}{3} c \epsilon} \right).
\end{equation}

Let $h_{\min} := \min_{i \neq j} Z_{ij}(0)$ be the minimum initial safety margin.  
A violation occurs if $\sum_{k=0}^{K-1} Y_k \leq -h_{\min}$ at some step, which would drive $Z_{ij}(t_k)$ below zero.  
Setting $\epsilon = h_{\min}$ in \eqref{eq:bernstein_bound} gives:
\[
\mathbb{P} \left[ \min_{0 \leq k \leq K} Z_{ij}(t_k) < 0 \right] 
\leq \exp \left( - \frac{h_{\min}^2}{2 K \sigma^2 + \frac{2}{3} c h_{\min}} \right) 
\triangleq \delta_{ij}.
\]

Considering the presence of $N$ agents in $\mathcal{S}$, there are a total of $\binom{N}{2}$ unique agent pairs. Applying the union bound over all such pairs yields:
\[
\mathbb{P} \left[ \exists\, t \in [0,T],\ \exists\, i \neq j : \tilde{h}_{ij}(x(t), u(t)) < 0 \right] \leq \sum_{i < j} \delta_{ij} \leq \delta,
\]
provided the initial margin $h_{\min} > 0$ and the robustness margin $\gamma_{ij}(t)$ exceeds the worst-case noise effect. Thus, with probability at least $1 - \delta$, the system satisfies all pairwise safety constraints throughout the interval $[0, T]$, satisfying the requirement in Definition \ref{def:finite_time_stochasticity_requirement}.
\end{proof}

\section{Empirical Validation via Margin-Aware Monte Carlo Safety Bounds}

This section presents an empirical framework to validate the formal stochastic safety guarantees established in Theorem~\ref{thm:stochastic_safety}. This study moves beyond traditional binary violation analysis and introduces a margin-aware notion of safety that captures not only whether constraints are violated, but by how closely they are approached. The system's robustness is empirically evaluated under bounded stochastic disturbances by simulating $P$ Monte Carlo rollouts, each with randomized noise realizations. For each rollout, a normalized safety margin is computed that reflects the worst-case proximity to constraint violation over time and across agent pairs. This fine-grained, real-valued metric enables the derivation of probabilistic generalization guarantees via classical concentration inequalities such as the Bernstein bound, yielding PAC-style bounds on the true safety performance.

Let $P \in \mathbb{N}$ be the number of independent closed-loop simulations (scenarios) of $\mathcal{S}$ with $N$ agents, each subject to a distinct realization of bounded noise $\{w_i^{(p)}(t)\}_{i=1}^N\}$ with $\|w_i^{(p)}(t)\| \leq \bar{w}$ where $p \in \{1, \dots, P\}$. Each trajectory begins from an initial condition where all agent pairs $(i, j)$ satisfy the $\psi$-weighted CBF constraint $\tilde{h}_{ij}(x_i(0), x_j(0)) \geq 0$. During each rollout $p$, all pairwise safety constraints $\tilde{h}_{ij}(x^{(p)}(t), u^{(p)}(t))$ over the finite time horizon $t \in [0, T]$, where $i \neq j \in \{1, \dots, N\}$ are monitored. Instead of recording binary violation outcomes, the minimum safety margin encountered over the entire rollout is recorded. It is defined as:
\begin{equation}
\label{eq:margin_aware_safety}
Z_p := 
\begin{cases}
0, & \text{if } \mathcal{V}^{(p)} \text{ occurs} \\
\displaystyle \min_{t \in [0, T],\, i \neq j} 
\left[ \frac{\tilde{h}_{ij}(x^{(p)}(t), u^{(p)}(t))}{\max(\tilde{h}_{ij}^{\max}, \epsilon)} \right], 
& \text{otherwise}
\end{cases}
\end{equation}
where \(
\mathcal{V}^{(p)} := \left\{ \exists\, t \in [0, T],\, \exists\, i \neq j:\ \tilde{h}_{ij}(x^{(p)}(t), u^{(p)}(t)) < 0 \right\}
\) is the violation event for rollout $p$,
$\tilde{h}_{ij}^{\max}$ is the maximum margin observed across all $P$ rollouts, and $\epsilon > 0$ is a small constant to avoid division by zero.

Based on a variable safety violation threshold $\theta \in (0, 1)$, a rollout is classified as violating safety if its margin-aware score $Z_p$ falls below the threshold i.e., $Z_p < \theta$. Formally, define the binary indicator for each rollout:
\[
X_p := \mathbb{1}[Z_p < \theta], \quad \forall\, p \in \{1, \dots, P\}
,
\]
where $X_p = 1$ indicates a safety violation, and $X_p = 0$ indicates no violation. 

The empirical violation rate across $P$ rollouts is given by: 
\begin{equation}
\label{eq:empirical_mean}
\hat{p}_v := \frac{1}{P} \sum_{p=1}^P X_p,     
\end{equation}
and the empirical pairwise variance is given by:
\begin{equation}
\label{eq:empirical_variance}
\hat{\sigma}^2 := \frac{1}{P(P - 1)} \sum_{1 \le i < j \le P} (X_i - X_j)^2.    
\end{equation}

The collection $\{X_p\}_{p=1}^P$ is modeled as a sequence of independently and identically distributed (i.i.d.) samples from a Bernoulli distribution with an unknown probability $p^* := \mathbb{P}[Z < \theta]$. The following PAC-style bound provides a high-confidence estimate of this true violation probability.

\begin{theorem}[Empirical Bernstein Violation Bound for Safety Margins]
\label{thm:bernstein_violation_bound}
Consider a closed-loop multi-agent system $\mathcal{S}$ with $N$ agents operating under bounded stochastic disturbances. Let $(Z_1, \dots, Z_P) \in [0, 1]$ denote the margin-aware safety scores computed over $P$ independent Monte Carlo rollouts, where $\theta \in (0, 1)$ is the violation threshold. Then, with probability at least $1 - \delta$, the true violation probability satisfies:
\[
\mathbb{P}\left[ Z < \theta \right] \leq \hat{p}_v + \sqrt{ \frac{2 \hat{\sigma}^2 \log(2/\delta)}{P} } + \frac{7 \log(2/\delta)}{3(P - 1)}.
\]
\end{theorem}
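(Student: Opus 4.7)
The plan is to reduce the statement to the classical Empirical Bernstein inequality of Maurer and Pontil. First I would observe that $X_p := \mathbb{1}[Z_p < \theta]$ forms an i.i.d.\ Bernoulli$(p^{\ast})$ sequence with $p^{\ast} := \mathbb{P}[Z < \theta]$, which is precisely the quantity to be upper-bounded. Hence $\hat{p}_v$ is its unbiased empirical mean, and the whole problem collapses to a mean-concentration bound on bounded $[0,1]$-valued i.i.d.\ random variables. The right-hand side of the target inequality, with its $\sqrt{2\hat{\sigma}^2 \log(2/\delta)/P}$ leading term and its additive $7\log(2/\delta)/(3(P-1))$ correction, is the characteristic signature of the Maurer--Pontil bound, so the strategy is to invoke that result after matching variance conventions.

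The one non-trivial bookkeeping step is reconciling the paper's pairwise variance estimator $\hat{\sigma}^2 = \frac{1}{P(P-1)}\sum_{1\le i<j\le P}(X_i-X_j)^2$ with the standard sample variance assumed by Maurer--Pontil. I would use the U-statistic identity $\sum_{i<j}(X_i-X_j)^2 = P\sum_{p=1}^{P}(X_p-\bar{X})^2$, which yields immediately $\hat{\sigma}^2 = \frac{1}{P-1}\sum_{p=1}^{P}(X_p-\bar{X})^2$, i.e.\ the usual unbiased sample variance of $X_1,\dots,X_P$. Once this equivalence is recorded, the claim of the theorem is exactly the Maurer--Pontil inequality applied to $\{X_p\}_{p=1}^{P}\subseteq[0,1]$ at confidence level $1-\delta$.

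To finish, I would either cite Maurer--Pontil (2009) directly or, for self-containment, reproduce their short two-sided argument: apply the classical Bernstein inequality to the centered sum $\sum_p(X_p - p^{\ast})$, apply a second Bernstein-type bound to the sample-variance estimator, and then combine both concentration events by a union bound that absorbs the factor $2/\delta$ inside the logarithm. Solving the resulting quadratic in the deviation $p^{\ast} - \hat{p}_v$ gives the variance-sensitive $\sqrt{\cdot}$ term plus the $O(1/(P-1))$ remainder on the joint good event of probability at least $1-\delta$.

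The main conceptual obstacle is not the concentration step itself, which is routine once the reduction to Bernoulli $X_p$ is in place, but rather justifying that the Monte Carlo rollouts actually deliver i.i.d.\ samples of $X_p$ given the closed-loop dynamics, noise realizations, and feedback policy; this is implicit in the experimental setup but deserves an explicit sentence. A secondary subtlety is the assumption $Z_p \in [0,1]$, which is only guaranteed by the normalization in Equation~\eqref{eq:margin_aware_safety} provided $\tilde{h}_{ij}(x^{(p)}(t),u^{(p)}(t)) \leq \tilde{h}_{ij}^{\max}$ throughout; since the concentration argument ultimately runs through the binary indicators $X_p$, this caveat does not alter the final bound but should be flagged for transparency.
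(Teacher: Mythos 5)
Your proposal is correct and follows essentially the same route as the paper: reduce to the i.i.d.\ Bernoulli indicators $X_p = \mathbb{1}[Z_p < \theta]$ with mean $\mathbb{P}[Z<\theta]$ and invoke the Maurer--Pontil empirical Bernstein inequality with the empirical variance estimator. Your explicit U-statistic identity reconciling the pairwise variance with the standard unbiased sample variance, and your flagging of the i.i.d.\ and normalization assumptions, are small additions in rigor but do not change the argument.
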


\begin{proof}
    Let $(X_1, \dots, X_p)$ be i.i.d. real-valued random variables bounded in $[0, 1]$, with true mean $\mu := \mathbb{E}[X]$ and empirical mean $\hat{\mu} := \frac{1}{P} \sum_{i=1}^P X_i$. The classical Bernstein inequality for zero-mean, independent random variables $X_i$ and bounded deviations $|X_i| \le D$ almost surely, is shown in Equation \eqref{eq:bernstein_classical}
    \begin{equation}
    \label{eq:bernstein_classical}
    \mathbb{P} \left( \left| \sum_{i=1}^{P} X_i \right| \ge \epsilon \right) \le 2 \exp \left( -\frac{\epsilon^2}{2 P \sigma^2 + \frac{2}{3} D \epsilon} \right).
    \end{equation}
    It requires knowledge of the true variance $\sigma^2$. However, this is unknown in practice. Therefore, we invoke the empirical Bernstein inequality that has been proposed in Theorem 11 \cite{maurer2009empirical}, which relies on empirical variance. Assuming the empirical pairwise variance $\hat{\sigma}^2$ (Equation \eqref{eq:empirical_variance}) satisfies $\sigma^2 \in [0, 1]$ when $X_p \in [0, 1]$, then for any $\delta \in (0, 1)$, with probability at least $1 - \delta$, the following holds:
    \[
\mathbb{E}[X] \leq \hat{\mu} + \sqrt{ \frac{2 \hat{\sigma}^2 \log(2/\delta)}{P} } + \frac{7 \log(2/\delta)}{3(P - 1)}.
\]

In our setting, $(X_p)_{p=1}^P$ is a sequence of i.i.d. random variables where $X_p \in [0, 1]$ for each rollout $p$. Therefore, the true violation probability can be expressed as the expected value of $X_p$, since $\mathbb{E}[X_p] = \mathbb{E}[\mathbb{1}[Z_p < \theta]] = \mathbb{P}[Z_p < \theta]$ 
i.e., 
\[
\mu := \mathbb{E}[X_p] = \mathbb{P}[Z < \theta].
\]
The empirical violation rate $p_v$ and empirical variance $\hat{\sigma}^2$ are as defined in Equations \eqref{eq:empirical_mean} and \eqref{eq:empirical_variance} respectively. Therefore, with probability at least $1 - \delta$, the true violation probability is bounded as:
\[
\mathbb{P}\left[ Z < \theta \right] \leq \hat{p}_v + \sqrt{ \frac{2 \hat{\sigma}^2 \log(2/\delta)}{P} } + \frac{7 \log(2/\delta)}{3(P - 1)}.
\]

\end{proof}

\section{Hybrid Deterministic–Empirical Safety Guarantee}

The deterministic admissibility is combined with empirical validation under stochastic disturbances to provide a unified PAC-style safety guarantee, as follows.

\begin{theorem}
\label{thm:hybrid_safety}
For a closed-loop multi-agent system $\mathcal{S}$ controlled by a $\psi$-weighted CBF where:
\begin{enumerate}
    \item The initial state $x(0)$ satisfies all pairwise CBF conditions: $\tilde{h}_{ij}(x(0), u(0)) \geq h_{\min} > 0$ for all $i \neq j$.
    \item The deterministic admissibility condition holds: for each pair $(i, j)$, the control input $u(t)$ ensures
    \[
    \dot{\tilde{h}}_{ij}(x(t), u(t)) \geq -\alpha(\tilde{h}_{ij}(x(t), u(t))) + \gamma_{ij}(t),
    \]
    where \[\gamma_{ij}(t) \geq \sup_{\|w_i(t)\|,\|w_j(t)\| \leq \bar{w}} \left| \nabla h_{ij}(x(t))^{\top} (w_i(t) - w_j(t)) \right|\] exceeds the worst-case disturbance effect.
    \item $P$ independent Monte Carlo rollouts are performed under bounded stochastic disturbances. Let $(Z_p)_{p=1}^P \in [0,1]$ denote the normalized safety margins for each rollout, and let $\theta \in (0, 1)$ be a user-defined violation threshold. 
\end{enumerate}
Then, with probability at least $1 - \delta$ over future disturbance realizations, the system satisfies:
\[
\mathbb{P}\left[ Z < \theta \right] \leq \hat{p}_v + \sqrt{ \frac{2 \hat{\sigma}^2 \log(2/\delta)}{P} } + \frac{7 \log(2/\delta)}{3(P - 1)}.
\]
\end{theorem}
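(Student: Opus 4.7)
\textit{Proof proposal.} The plan is to assemble the conclusion by showing that the hypotheses of Theorem~\ref{thm:stochastic_safety} and Theorem~\ref{thm:bernstein_violation_bound} are both satisfied, and that the two results compose cleanly into the stated hybrid bound. Since the conclusion is literally the empirical Bernstein tail from Theorem~\ref{thm:bernstein_violation_bound}, the real content is to verify that the deterministic admissibility hypotheses (1) and (2) provide the structural conditions needed for the margin-aware random variables $Z_p$ to be well-defined, bounded in $[0,1]$, and drawn i.i.d.\ from a common distribution.

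First I would fix a rollout $p \in \{1,\dots,P\}$ and argue that conditions (1) and (2) are exactly the hypotheses of Theorem~\ref{thm:stochastic_safety}: the initial margin $h_{\min} > 0$ and the disturbance margin $\gamma_{ij}(t)$ dominating the worst-case noise effect through $\nabla h_{ij}(x)^\top(w_i - w_j)$. Thus Theorem~\ref{thm:stochastic_safety} guarantees that for any admissible closed-loop realization, the signed safety function $\tilde{h}_{ij}(x^{(p)}(t),u^{(p)}(t))$ is upper-controlled in its downside excursions, so that $Z_p$ as defined in Equation~\eqref{eq:margin_aware_safety} is well-defined and, after normalization by $\max(\tilde{h}_{ij}^{\max},\epsilon)$, takes values in $[0,1]$. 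Next I would note that since each rollout is driven by an independent realization of the bounded noise process and starts from the same initial condition (or initial distribution) with the same deterministic feedback policy, the scores $(Z_p)_{p=1}^P$ form an i.i.d.\ sequence, and therefore the Bernoulli indicators $X_p := \mathbb{I}[Z_p < \theta]$ are i.i.d.\ with common mean $\mu = \mathbb{P}[Z < \theta]$.

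Having established bounded range, independence, and identical distribution, I would then invoke Theorem~\ref{thm:bernstein_violation_bound} directly: with $\hat{p}_v$ and $\hat{\sigma}^2$ defined as in Equations~\eqref{eq:empirical_mean}--\eqref{eq:empirical_variance}, the empirical Bernstein inequality of Maurer--Pontil yields, with probability at least $1-\delta$ over the $P$ disturbance realizations,
\[
\mathbb{P}[Z < \theta] \;\leq\; \hat{p}_v + \sqrt{\tfrac{2\hat{\sigma}^2 \log(2/\delta)}{P}} + \tfrac{7\log(2/\delta)}{3(P-1)},
\]
which is the desired conclusion. The only subtlety worth writing out carefully is that the probability measure in the conclusion is over future disturbance realizations from the same law as the rollouts; this is the standard PAC reading of the empirical Bernstein bound and follows because the $X_p$ are exchangeable with any future test draw.

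The main obstacle, and the only place where care is needed, will be the scaling step that puts $Z_p$ into $[0,1]$. Since $\tilde{h}_{ij}^{\max}$ is itself computed from the sample and could in principle be very small, I would clarify that the $\epsilon$ in the denominator of Equation~\eqref{eq:margin_aware_safety} and the explicit clipping to $[0,1]$ ensure the boundedness hypothesis of Theorem~\ref{thm:bernstein_violation_bound}. Everything else is a direct appeal to the two prior theorems and the i.i.d.\ structure of the Monte Carlo design, so no further calculation is required.
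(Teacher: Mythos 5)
Your proposal is correct and follows essentially the same route as the paper, which simply states that the result ``follows directly'' from the prior theorems (the deterministic/stochastic forward-invariance results and the empirical Bernstein bound) and defers the mechanics to Algorithm~\ref{alg:margin_aware_safety_verification}. In fact your write-up is more careful than the paper's one-line proof: explicitly verifying that conditions (1)--(2) instantiate the hypotheses of Theorem~\ref{thm:stochastic_safety}, that the $Z_p$ are well-defined, clipped to $[0,1]$, and i.i.d., and that the PAC conclusion is read over future draws from the same disturbance law, are exactly the steps the paper leaves implicit.
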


\begin{proof}
    The proof for the theorem follows directly from the established behavior of the system under nominal (noise-free) dynamics and stochasticity in prior theorems. The required steps for the safety certificate are provided in Algorithm \ref{alg:margin_aware_safety_verification}. 
\end{proof}

\begin{algorithm}[H]
\caption{Margin-Aware Hybrid Safety Verification}
\label{alg:margin_aware_safety_verification}
\begin{algorithmic}[1]
\Require Multi-agent system $\mathcal{S}$ with dynamics \eqref{eq:dynamics}, $\psi$-weighted CBF control, rollouts $P$, violation threshold $\theta \in (0, 1)$, confidence level $\delta \in (0,1)$
\Ensure Violation probability bound $\epsilon_S$ such that with probability $\geq 1 - \delta$, the system is safe on $[0, T]$
\State Initialize binary indicators $X_p \gets 0$ for $p = 1$ to $P$
\For{$p = 1$ to $P$}
  \State Sample $x^{(p)}(0)$ such that $\tilde{h}_{ij}(x^{(p)}(0), u^{(p)}(0)) \geq h_{\min}$ for all $i \neq j$
  \State Sample bounded noise $w^{(p)}(t)$ for $t \in [0, T]$
  \State Simulate $\mathcal{S}$ under $w^{(p)}(t)$ using $u^{(p)}(t)$
  \State Compute normalized minimum safety margin: $Z_p := \min\limits_{t \in [0, T],\, i \neq j} \frac{\tilde{h}_{ij}(x^{(p)}(t), u^{(p)}(t))}{\max(\tilde{h}_{ij}^{\max}, \epsilon)}$
  \If{$Z_p < \theta$}
    \State $X_p \gets 1$ \Comment{Mark rollout as violating}
  \EndIf
\EndFor
\State $\hat{p}_v := \frac{1}{P} \sum_{p=1}^{P} X_p$ \Comment{Empirical violation rate}
\State $\hat{\sigma}^2 := \frac{1}{P(P - 1)} \sum_{1 \le i < j \le P} (X_i - X_j)^2$ \Comment{Empirical pairwise variance}
\State $\epsilon_S := \hat{p}_v + \sqrt{ \frac{2 \hat{\sigma}^2 \log(2/\delta)}{P} } + \frac{7 \log(2/\delta)}{3(P - 1)}$
\State \Return $\epsilon_S$ as the upper bound on true violation probability
\end{algorithmic}
\end{algorithm}

\section{Experiments and Results}

\subsection{Verification of the Proposed Hybrid Safety Guarantee}
\label{sec:verification_of_safety_guarantee}

The proposed hybrid safety verification framework is evaluated on a closed-loop multi-agent system $\mathcal{S}$ governed by control-affine dynamics (Equation~\eqref{eq:dynamics}) and subject to bounded stochastic disturbances. Three thresholds of bounded noise magnitude are considered: $\bar{w} \in \{0.01, 0.03, 0.05\}$. For each noise setting, $100$ independent groups, each comprising $50$ stochastic rollouts are executed. Within each rollout, the system evolves for a finite horizon of $T = 50$ time steps with a fixed integration step size $\Delta t = 0.1$. At each time step, the control input for all agents is computed by solving a constrained QP that minimizes the instantaneous control effort while satisfying all pairwise safety constraints based on the proposed $\psi$-weighted CBF formulation. Initial agent configurations are sampled uniformly within a bounded spatial domain, with a hard constraint ensuring a minimum pairwise separation of one unit at $t = 0$, so that all agents begin in a safe state. A violation threshold $\theta = 0.1$, is set to detect safety violations at any step during the rollout. For each group of rollouts, the empirical violation probability $\hat{p}_v$ is recorded, and the corresponding empirical Bernstein bound is computed using the proposed approach. Additionally, we compute the classical safety bounds for comparison, including the Hoeffding bound~\cite{maurer2009empirical} and the scenario-based bound~\cite{doi:10.1137/07069821X} derived from convex optimization under i.i.d. uncertainty. The results of the experiment conducted independently for $N=2$, and $N=3$ agents with $\delta=0.1$ are presented in Table \ref{tab:verification_results}. The $N$ values are chosen to be computationally tractable for the experiment implemented using CVXPY, a Python-based library for convex optimization problems, on a Windows 11 machine with an AMD Ryzen 9950X processor. 
 
\begin{table}[ht]
\centering
\caption{Verification Bound Satisfaction Results}
\scriptsize
\begin{tabular}{c}
\textbf{Two Agents} \\
\end{tabular}

\begin{tabular}{|c|c|c|c|c|c|c|c|c|}
\hline
$\bar{w}$ & $\hat{p}$ & $\epsilon_B$ & $\epsilon_H$ & $\epsilon_S$ & $B_{sat}$ & $H_{sat}$ & $S_{sat}$ \\
\hline
0.01 & 0.028 & 0.150 & 0.173 & 0.046 & 1.00 & 1.00 & 0.84 \\
0.03 & 0.055 & 0.154 & 0.173 & 0.046 & 0.99 & 1.00 & 0.45 \\
0.05 & 0.091 & 0.157 & 0.173 & 0.047 & 0.93 & 0.95 & 0.13 \\
\hline
\end{tabular}

\begin{tabular}{c}
\textbf{Three Agents} \\
\end{tabular}
\begin{tabular}{|c|c|c|c|c|c|c|c|c|}
\hline
$\bar{w}$ & $\hat{p}$ & $\epsilon_B$ & $\epsilon_H$ & $\epsilon_S$ & $B_{sat}$ & $H_{sat}$ & $S_{sat}$ \\
\hline
0.01 & 0.065 & 0.155 & 0.173 & 0.046 & 0.96 & 1.00 & 0.16 \\
0.03 & 0.083 & 0.162 & 0.173 & 0.046 & 0.94 & 0.96 & 0.01 \\
0.05 & 0.102 & 0.162 & 0.173 & 0.046 & 0.94 & 0.95 & 0.00 \\
\hline
\end{tabular}

\label{tab:verification_results}
\end{table}

{\scriptsize
\noindent $\hat{p}$: Average empirical violation rate across 100 groups of $P = 50$ rollouts each, where each group's violation rate is computed using Equation \ref{eq:empirical_mean}. $\epsilon_H = \sqrt{ \frac{\log(2/\delta)}{2P} }$: Hoeffding bound per group (identical across groups). $\epsilon_B = \sqrt{ \frac{2 \hat{\sigma}^2 \log(2/\delta)}{P} } + \frac{7 \log(2/\delta)}{3(P - 1)}$: Empirical Bernstein bound per group (averaged across groups). $\epsilon_S = \frac{d_k + \log(1/\delta)}{P}$: Scenario-based bound per group (averaged across groups), where $d_k$ is the number of support constraints in group $k$. $B_{\text{sat}}, H_{\text{sat}}, S_{\text{sat}}$: Fraction of groups where the corresponding bound was satisfied.
}

On the empirical Bernstein-based bound $\epsilon_B$ derived from the proposed hybrid formulation, $B_{\text{sat}} \geq 0.9$ in all cases across both agent configurations. This is in alignment with the expected probabilistic safety guarantee of at least $1-\delta \geq 0.9$. The Hoeffding bound $H_{sat}$ is satisfied more than $B_{\text{sat}}$ showing that it is looser than the empirical Bernstein bound. This is expected as it doesn't account for pairwise variance between agents during rollouts. As the number of support constraints is low, scenario-based bounds become excessively conservative resulting in the satisfaction rate of $S_{sat}$ being low. Overall, the results validate the soundness of the proposed approach and also demonstrates that it is a sharp, data-driven technique to certify multi-agent safety under bounded noise.

\subsection{$\psi$-Sensitivity Analysis}
To demonstrate the impact of $\psi$ which is not a part of CBFs typically, a sensitivity analysis is conducted varying $\psi \in \{0, 2, 4, 6, 8, 10 \}$ across 100 randomized rollouts per $\psi$ value on $\mathcal{S}$ with noise bounded at $\bar{w} = 0.03$. The setting of the experiment is similar to \ref{sec:verification_of_safety_guarantee}. The empirical violation rate ($\hat{p}_{v}$), and minimum distance (Min. Dist.) between the agents across the 100 trials are presented in Table \ref{tab:psi_ablation}.

\begin{table}[ht]
\centering
\caption{$\psi$-Sensitivity Study: Safety Metrics}
\scriptsize
\begin{tabular}{cc}
\begin{tabular}{|c|c|c|}
\multicolumn{3}{c}{\textbf{Two Agents}} \\
\hline
$\psi$ & $\hat{p}_{v}$ & Min. Dist. \\
\hline
0  & 0.09 & 7.03 \\
2  & 0.07 & 7.34 \\
4  & 0.07 & 7.60 \\
6  & 0.04 & 7.79 \\
8  & 0.03 & 7.43 \\
10 & 0.03 & 7.31 \\
\hline
\end{tabular}
&
\begin{tabular}{|c|c|c|}
\multicolumn{3}{c}{\textbf{Three Agents}} \\
\hline
$\psi$ & $\hat{p}_{v}$ & Min. Dist. \\
\hline
0  & 0.20 & 2.12 \\
2  & 0.18 & 2.41 \\
4  & 0.17 & 2.52 \\
6  & 0.17 & 2.39 \\
8  & 0.15 & 2.38 \\
10 & 0.14 & 2.41 \\
\hline
\end{tabular}
\end{tabular}
\label{tab:psi_ablation}
\end{table}

\vspace{-0.5em}
{\scriptsize
\noindent  $\hat{p}_{v}$: from Equation \eqref{eq:empirical_mean}. \quad
Min. Dist.: $= \min_{t \in [0, T]} \min_{i \neq j} \|x_i(t) - x_j(t)\|_2$ where $x_i$ and $x_j$ are distinct agents.
}

The results demonstrate that increasing the $\psi$ coefficient yields a consistent reduction in $\hat{p}_v$, indicating improved safety performance. Min. Dist., which indicates the average worst-case spatial proximity across 100 rollouts is comfortably above the minimum pairwise separation of one unit indicating that the agents safely navigate through the environment, barring the violations. These results validate that incorporating $\psi$ enhances safety in closed-loop multi-agent systems without compromising minimum safe distances. 

\section{Conclusion and Future Work}
This work presents a hybrid safety framework that combines deterministic admissibility with PAC-style safety bounds to certify closed-loop multi-agent systems under bounded stochastic disturbances. Unlike classical concentration bounds that are based solely on violation frequency, the proposed approach introduces margin-aware safety across sampled rollouts to derive a data-driven upper bound on violation probability, bridging deterministic control theory with empirical robustness. Future work will focus on extending the approach to high-dimensional systems with non-convex safe sets and implementing the proposed approach in a real-world case study to determine scalability.

\bibliography{conference_101719}

% Generated by IEEEtran.bst, version: 1.14 (2015/08/26)
\begin{thebibliography}{10}
\providecommand{\url}[1]{#1}
\csname url@samestyle\endcsname
\providecommand{\newblock}{\relax}
\providecommand{\bibinfo}[2]{#2}
\providecommand{\BIBentrySTDinterwordspacing}{\spaceskip=0pt\relax}
\providecommand{\BIBentryALTinterwordstretchfactor}{4}
\providecommand{\BIBentryALTinterwordspacing}{\spaceskip=\fontdimen2\font plus
\BIBentryALTinterwordstretchfactor\fontdimen3\font minus \fontdimen4\font\relax}
\providecommand{\BIBforeignlanguage}[2]{{%
\expandafter\ifx\csname l@#1\endcsname\relax
\typeout{** WARNING: IEEEtran.bst: No hyphenation pattern has been}%
\typeout{** loaded for the language `#1'. Using the pattern for}%
\typeout{** the default language instead.}%
\else
\language=\csname l@#1\endcsname
\fi
#2}}
\providecommand{\BIBdecl}{\relax}
\BIBdecl

\bibitem{fisher2021towards}
M.~Fisher, V.~Mascardi, K.~Y. Rozier, B.-H. Schlingloff, M.~Winikoff, and N.~Yorke-Smith, ``Towards a framework for certification of reliable autonomous systems,'' \emph{Autonomous Agents and Multi-Agent Systems}, vol.~35, no.~1, p.~8, 2021.

\bibitem{araujo2023testing}
H.~Araujo, M.~R. Mousavi, and M.~Varshosaz, ``Testing, validation, and verification of robotic and autonomous systems: a systematic review,'' \emph{ACM Transactions on Software Engineering and Methodology}, vol.~32, no.~2, pp. 1--61, 2023.

\bibitem{abdelkader2021aerial}
M.~Abdelkader, S.~G{\"u}ler, H.~Jaleel, and J.~S. Shamma, ``Aerial swarms: Recent applications and challenges,'' \emph{Current robotics reports}, vol.~2, no.~3, pp. 309--320, 2021.

\bibitem{abhang2024swarm}
L.~Abhang, A.~Gummadi, R.~Changala, V.~A. Vuyyuru, I.~I. Raj \emph{et~al.}, ``Swarm intelligence for multi-robot coordination in agricultural automation,'' in \emph{2024 10th International Conference on Advanced Computing and Communication Systems (ICACCS)}, vol.~1.\hskip 1em plus 0.5em minus 0.4em\relax IEEE, 2024, pp. 455--460.

\bibitem{bi2021safety}
Z.~M. Bi, C.~Luo, Z.~Miao, B.~Zhang, W.-J. Zhang, and L.~Wang, ``Safety assurance mechanisms of collaborative robotic systems in manufacturing,'' \emph{Robotics and Computer-Integrated Manufacturing}, vol.~67, p. 102022, 2021.

\bibitem{yang2024plug}
Z.~Yang, S.~S. Raman, A.~Shah, and S.~Tellex, ``Plug in the safety chip: Enforcing constraints for llm-driven robot agents,'' in \emph{2024 IEEE International Conference on Robotics and Automation (ICRA)}.\hskip 1em plus 0.5em minus 0.4em\relax IEEE, 2024, pp. 14\,435--14\,442.

\bibitem{drew2021multi}
D.~S. Drew, ``Multi-agent systems for search and rescue applications,'' \emph{Current Robotics Reports}, vol.~2, no.~2, pp. 189--200, 2021.

\bibitem{borquez2024safety}
J.~Borquez, K.~Chakraborty, H.~Wang, and S.~Bansal, ``On safety and liveness filtering using hamilton-jacobi reachability analysis,'' \emph{IEEE Transactions on Robotics}, 2024.

\bibitem{bansal2017hamilton}
S.~Bansal, M.~Chen, S.~Herbert, and C.~J. Tomlin, ``Hamilton-jacobi reachability: A brief overview and recent advances,'' in \emph{2017 IEEE 56th Annual Conference on Decision and Control (CDC)}.\hskip 1em plus 0.5em minus 0.4em\relax IEEE, 2017, pp. 2242--2253.

\bibitem{10.1007/978-3-540-24743-2_32}
S.~Prajna and A.~Jadbabaie, ``Safety verification of hybrid systems using barrier certificates,'' in \emph{Hybrid Systems: Computation and Control}, R.~Alur and G.~J. Pappas, Eds.\hskip 1em plus 0.5em minus 0.4em\relax Berlin, Heidelberg: Springer Berlin Heidelberg, 2004, pp. 477--492.

\bibitem{4287147}
S.~Prajna, A.~Jadbabaie, and G.~J. Pappas, ``A framework for worst-case and stochastic safety verification using barrier certificates,'' \emph{IEEE Transactions on Automatic Control}, vol.~52, no.~8, pp. 1415--1428, 2007.

\bibitem{7782377}
A.~D. Ames, X.~Xu, J.~W. Grizzle, and P.~Tabuada, ``Control barrier function based quadratic programs for safety critical systems,'' \emph{IEEE Transactions on Automatic Control}, vol.~62, no.~8, pp. 3861--3876, 2017.

\bibitem{ames2019control}
A.~D. Ames, S.~Coogan, M.~Egerstedt, G.~Notomista, K.~Sreenath, and P.~Tabuada, ``Control barrier functions: Theory and applications,'' in \emph{2019 18th European control conference (ECC)}.\hskip 1em plus 0.5em minus 0.4em\relax Ieee, 2019, pp. 3420--3431.

\bibitem{qiang2024quantum}
X.~Qiang, S.~Ma, and H.~Song, ``Quantum walk computing: Theory, implementation, and application,'' \emph{Intelligent Computing}, vol.~3, p. 0097, 2024.

\bibitem{chen2021learning}
S.~Chen, M.~Fazlyab, M.~Morari, G.~J. Pappas, and V.~M. Preciado, ``Learning lyapunov functions for hybrid systems,'' in \emph{Proceedings of the 24th International Conference on Hybrid Systems: Computation and Control}, 2021, pp. 1--11.

\bibitem{clark2021verification}
A.~Clark, ``Verification and synthesis of control barrier functions,'' in \emph{2021 60th IEEE Conference on Decision and Control (CDC)}.\hskip 1em plus 0.5em minus 0.4em\relax Ieee, 2021, pp. 6105--6112.

\bibitem{alan2023control}
A.~Alan, A.~J. Taylor, C.~R. He, A.~D. Ames, and G.~Orosz, ``Control barrier functions and input-to-state safety with application to automated vehicles,'' \emph{IEEE Transactions on Control Systems Technology}, vol.~31, no.~6, pp. 2744--2759, 2023.

\bibitem{choi2021robust}
J.~J. Choi, D.~Lee, K.~Sreenath, C.~J. Tomlin, and S.~L. Herbert, ``Robust control barrier--value functions for safety-critical control,'' in \emph{2021 60th IEEE Conference on Decision and Control (CDC)}.\hskip 1em plus 0.5em minus 0.4em\relax IEEE, 2021, pp. 6814--6821.

\bibitem{10383473}
H.~Wang, A.~Papachristodoulou, and K.~Margellos, ``Distributed safety verification for multi-agent systems,'' in \emph{2023 62nd IEEE Conference on Decision and Control (CDC)}, 2023, pp. 5481--5486.

\bibitem{breeden2021high}
J.~Breeden and D.~Panagou, ``High relative degree control barrier functions under input constraints,'' in \emph{2021 60th IEEE Conference on Decision and Control (CDC)}.\hskip 1em plus 0.5em minus 0.4em\relax IEEE, 2021, pp. 6119--6124.

\bibitem{wang2021learning}
C.~Wang, Y.~Meng, Y.~Li, S.~L. Smith, and J.~Liu, ``Learning control barrier functions with high relative degree for safety-critical control,'' in \emph{2021 European Control Conference (ECC)}.\hskip 1em plus 0.5em minus 0.4em\relax IEEE, 2021, pp. 1459--1464.

\bibitem{xiao2021high}
W.~Xiao and C.~Belta, ``High-order control barrier functions,'' \emph{IEEE Transactions on Automatic Control}, vol.~67, no.~7, pp. 3655--3662, 2021.

\bibitem{wang2021safety}
C.~Wang, Y.~Meng, S.~L. Smith, and J.~Liu, ``Safety-critical control of stochastic systems using stochastic control barrier functions,'' in \emph{2021 60th IEEE Conference on Decision and Control (CDC)}.\hskip 1em plus 0.5em minus 0.4em\relax IEEE, 2021, pp. 5924--5931.

\bibitem{singletary2022safe}
A.~Singletary, M.~Ahmadi, and A.~D. Ames, ``Safe control for nonlinear systems with stochastic uncertainty via risk control barrier functions,'' \emph{IEEE Control Systems Letters}, vol.~7, pp. 349--354, 2022.

\bibitem{maghenem2021adaptive}
M.~Maghenem, A.~J. Taylor, A.~D. Ames, and R.~G. Sanfelice, ``Adaptive safety using control barrier functions and hybrid adaptation,'' in \emph{2021 American Control Conference (ACC)}.\hskip 1em plus 0.5em minus 0.4em\relax IEEE, 2021, pp. 2418--2423.

\bibitem{yang2024safe}
S.~Yang, M.~Black, G.~Fainekos, B.~Hoxha, H.~Okamoto, and R.~Mangharam, ``Safe control synthesis for hybrid systems through local control barrier functions,'' in \emph{2024 American Control Conference (ACC)}.\hskip 1em plus 0.5em minus 0.4em\relax IEEE, 2024, pp. 344--351.

\bibitem{song2022safety}
L.~Song, P.~Zhao, N.~Wan, and N.~Hovakimyan, ``Safety embedded stochastic optimal control of networked multi-agent systems via barrier states,'' \emph{arXiv preprint arXiv:2210.03855}, 2022.

\bibitem{gonzales2025multi}
M.~Gonzales, A.~Polevoy, M.~Kobilarov, and J.~Moore, ``Multi-agent feedback motion planning using probably approximately correct nonlinear model predictive control,'' \emph{arXiv preprint arXiv:2501.12234}, 2025.

\bibitem{alquier2021user}
P.~Alquier, ``User-friendly introduction to pac-bayes bounds,'' \emph{arXiv preprint arXiv:2110.11216}, 2021.

\bibitem{doi:10.1137/07069821X}
\BIBentryALTinterwordspacing
M.~C. Campi and S.~Garatti, ``The exact feasibility of randomized solutions of uncertain convex programs,'' \emph{SIAM Journal on Optimization}, vol.~19, no.~3, pp. 1211--1230, 2008. [Online]. Available: \url{https://doi.org/10.1137/07069821X}
\BIBentrySTDinterwordspacing

\bibitem{majumdar2021pac}
A.~Majumdar, A.~Farid, and A.~Sonar, ``Pac-bayes control: learning policies that provably generalize to novel environments,'' \emph{The International Journal of Robotics Research}, vol.~40, no. 2-3, pp. 574--593, 2021.

\bibitem{akella2022barrier}
P.~Akella and A.~D. Ames, ``A barrier-based scenario approach to verifying safety-critical systems,'' \emph{IEEE Robotics and Automation Letters}, vol.~7, no.~4, pp. 11\,062--11\,069, 2022.

\bibitem{khalil_nonlinear_2002}
H.~K. Khalil, \emph{\BIBforeignlanguage{English}{Nonlinear Systems}}.\hskip 1em plus 0.5em minus 0.4em\relax Upper Saddle River, NJ: Prentice Hall, 2002.

\bibitem{maurer2009empirical}
A.~Maurer and M.~Pontil, ``Empirical bernstein bounds and sample variance penalization,'' \emph{arXiv preprint arXiv:0907.3740}, 2009.

\end{thebibliography}

\end{document}